\newcommand{\transposed}{\text{T}}
\newcommand{\expname}[1]{\textsc{#1}}
\newcommand{\ex}[1]{*+[o][F]{\expname{\small#1}}}
\newcommand{\identity}{\mathbf I}
\newcommand{\onesvec}{\mathbf 1}
\newcommand{\support}{\operatorname{Sp}}
\newtheorem{theorem}{Theorem}
\theoremstyle{definition}
\newtheorem{definition}{Definition}
 \newtheorem{exampleX}{Example}[section] %
\newenvironment{example}[1][nope]{%
  \ifthenelse{\equal{#1}{nope}}{\begin{exampleX}}{\begin{exampleX}[#1]}%
  \pushQED{\qed}%
}{%
  \popQED%
  \end{exampleX}%
}
\let\del\undefined
\let\set\undefined
\let\sbr\undefined
\DeclarePairedDelimiter\del{\lparen}{\rparen}
\DeclareTripledDelimiter\delc{\lparen}{\vert}{\rparen}{\mathord}
\DeclareTripledDelimiter\delcc{\lparen}{\Vert}{\rparen}{\mathord}
\DeclarePairedDelimiter\set{\lbrace}{\rbrace}
\DeclareTripledDelimiter\setc{\lbrace}{\vert}{\rbrace}{\mathrel}
\DeclarePairedDelimiter\sbr{\lbrack}{\rbrack}
\DeclareTripledDelimiter\sbrc{\lbrack}{\vert}{\rbrack}{\mathord}
\DeclarePairedDelimiter\tuple{\langle}{\rangle}
\DeclarePairedDelimiter\card{\lvert}{\rvert}
\DeclareMathSymbol{:}{\mathord}{operators}{`:}
\newcommand{\df}{\vcentcolon\nolinebreak\mkern-1.2mu=}
\newcommand{\algoref}[1]{Algorithm~\ref{#1}}
\newcommand{\tabref}[1]{Table~\ref{#1}}
\newcommand{\exaref}[1]{Example~\ref{#1}}
\newcommand{\markdef}[1]{\emph{#1}} %
\newcommand{\family}[3][-1]{\tuple[#1]{#2}_{#3}}  %
\newcommand{\sups}[1][nope]{%
\ifthenelse{\equal{#1}{nope}}{}{^{#1}}}
\newcommand{\subs}[1][nope]{%
\ifthenelse{\equal{#1}{nope}}{}{_{#1}}}
\newcommand{\notationbuddy}[2]{%
  \uppercase{\def\thisupper{#1}}%
  \expandafter\newcommand\csname\thisupper\endcsname{\mathcal{\uppercase{#2}}}
  \expandafter\newcommand\csname#1\endcsname{#2}
  \expandafter\newcommand\csname#1s\endcsname[1][1:T]{{\csname#1\endcsname}_{##1}}
  \expandafter\newcommand\csname#1rv\endcsname{\uppercase{#2}}
  \expandafter\newcommand\csname\thisupper s\endcsname[1][1:T]{{\csname#1rv\endcsname}_{##1}}
}
\renewcommand{\xi}{\e}         %
\renewcommand{\Xi}{\E}         %
\newcommand{\Xis}{\Es}         %
\let\actions\A                 %
\let\expert\e                  %
\let\Expert\erv                %
\newcommand{\h}{\A}
\renewcommand{\H}{\mathcal H}
\newcommand{\N}{\mathbb N}                        %
\let\reals\R
\DeclareMathOperator{\loss}{\ell}                    %
\DeclareMathOperator{\Spmass}{p}                  %
\DeclareMathOperator{\ent}{H}                     %
\newcommand{\pmass}{{\Spmass}\subs}
\newcommand{\hmml}[1]{\operatorname{#1}}             %
\newcommand{\init}[1]{\hmml{#1}_\circ\sups}          %
\newcommand{\tf}[1]{\hmml{#1}_\shortrightarrow\sups} %
\newcommand{\pf}[1]{\hmml{#1}_\shortdownarrow\sups}  %
\newcommand{\pinit}{\init{p}}              %
\newcommand{\ptf}{\tf{p}}                  %
\newcommand{\ppf}{\pf{p}}                  %
\newcommand{\ntransitions}{h}             %
\newcommand{\nproductions}{g}             %
\newcommand{\hmm}[1]{\mathfrak{#1}}
\renewcommand{\A}{\hmm{H}}                          %
\newcommand{\modifier}[1]{\textnormal{\sffamily #1}}
\renewcommand{\sl}{\modifier{sl}}
\newcommand{\fr}{\modifier{fr}}
\newcommand{\fos}{\modifier{v}}%
\newcommand{\Part}{\mathbb C}                     %
\newcommand{\Cell}{\mathcal C}                    %
\newcommand{\PredPost}{\lambda}      %
\newcommand{\Pred}{\pmass\sups}
\newcommand{\ExPred}[1]{\Pred[#1]}             %
\newcommand{\ExPreds}{\Pred[\Xi]}              %
\newcommand{\fs}{\modifier{FS}}
\newcommand{\bayes}{\modifier{B}}
\newcommand{\dm}{\modifier{DM}}
\newcommand{\rv}[1]{*+=+[o][F]{#1}}     %
\newcommand{\lt}[2]{#1^*{#2}}           %
\newcommand{\exclude}[1]{}
\let\@copyrightspace\relax
\begin{document}
\twocolumn[

\aistatstitle{Switching between Hidden Markov Models using Fixed Share}

\aistatsauthor{Wouter M. Koolen \and Tim van Erven}

\aistatsaddress{Centrum Wiskunde \& Informatica (CWI)\\
P.O. Box 94079, NL-1090 GB Amsterdam, The Netherlands\\
\textsc{\{Wouter.Koolen, Tim.van.Erven\}@cwi.nl}} 
]

\abovedisplayskip 8.5pt plus3pt minus4pt
\belowdisplayskip \abovedisplayskip

\exclude{
\pagenumbering{roman}  %
\tableofcontents
\clearpage
\pagenumbering{arabic} %
}

\begin{abstract}
  In prediction with expert advice the goal is to design online
  prediction algorithms that achieve small regret (additional loss on
  the whole data) compared to a reference scheme. In the simplest such
  scheme one compares to the loss of the best expert in hindsight. A
  more ambitious goal is to split the data into segments and compare to
  the best expert on each segment. This is appropriate if the nature of
  the data changes between segments. The standard fixed-share algorithm
  is fast and achieves small regret compared to this scheme.

  Fixed share treats the experts as black boxes: there are no
  assumptions about how they generate their predictions. But if the
  experts are learning, the following question arises: should the
  experts learn from all data or only from data in their own segment?
  The original algorithm naturally addresses the first case. Here we
  consider the second option, which is more appropriate exactly when the
  nature of the data changes between segments. In general extending
  fixed share to this second case will slow it down by a factor of $T$
  on $T$ outcomes. We show, however, that no such slowdown is necessary
  if the experts are hidden Markov models.
\end{abstract}

\section{Introduction}

In \emph{prediction with expert advice}~\citep{cesa-bianchi2006} a
sequence of outcomes $x_1,x_2,\ldots$ needs to be predicted, one outcome
at a time. Thus, prediction proceeds in rounds: in each round we first
consult a set of experts, who give us their predictions. (We use the
word \emph{expert} for any source of predictions that is available to us
as input.) Then we make our own prediction and incur some loss based on
the discrepancy between our prediction and the actual outcome.
Predictions may for example be in the form of a probability distribution
on outcomes. Loss may be logarithmic loss, i.e.\ the negative logarithm
of the probability assigned to the outcome that actually occurs. The
goal is to minimise our \emph{regret}, which is the difference between
our own cumulative loss on the whole data and the cumulative loss of a
\emph{reference scheme}, which typically involves tuned parameter settings unknown to us
when we make our predictions. For the reference scheme there are several
options; we may, for example, compare ourselves to the cumulative loss
of the best expert in hindsight (after observing the data). A more
ambitious scheme, called \emph{tracking the best expert}, is addressed
by the fixed-share algorithm of \citet{HerbsterWarmuth1998}.

\subsection{Tracking the Best Expert}

In tracking the best expert (TBE), the goal is to achieve small regret
compared to the following reference scheme: 
\begin{enumerate}[(a)]
\setlength{\itemsep}{0pt}
\setlength{\parskip}{0pt}
\item\label{it:partition.data}
  Split the data into segments.
\item \label{it:choose.experts}
  Select an expert for each segment.
\item \label{it:sum.expert.loss}
  Sum the loss of the selected experts on their segments.
\end{enumerate}
This reference scheme is appropriate if the nature of the data changes
between segments. It is harder than comparing to the single best expert
in hindsight, because now there are more unknowns: both the segmentation
(step \ref{it:partition.data}) and the reference experts (step
\ref{it:choose.experts}) are unknown when we make our predictions. In
particular the reference experts may be the best experts in hindsight
for their assigned segments.

In 1995 Herbster and Warmuth introduced an efficient algorithm called
\emph{fixed share} (\fs) and showed that it achieves small regret (see
\thmref{thm:fixed.share.loss.bound} below) compared to the TBE reference
scheme of \citet{HerbsterWarmuth1998}. Given the predictions of the
experts, the algorithm's running time is linear in the number of
outcomes and linear in the number of experts. Problem solved. Or is it?

\subsection{Learning Experts}

In this paper we take another look at the TBE reference scheme for
\emph{learning experts} and ask: if an expert is selected for some
segment, then should the expert learn from all data or only from the
data in that segment?

We may assume that the experts do not know the segmentation chosen in
step \ref{it:partition.data} of the reference scheme. (Otherwise, why
wouldn't we just ask them?) Hence if we treat the experts as black boxes
and only ask for their prediction at each time step as
in~\citep{HerbsterWarmuth1998}, it is natural that they learn from all
data. We call this the \emph{standard} interpretation of the TBE
reference scheme (S-TBE).

However, as the following example will illustrate, it may be beneficial
if experts learn only from the segment for which they are selected,
because they may get confused by data in other segments that follow a
different pattern. We call this the \emph{local learners} interpretation
of tracking the best expert (LL-TBE). As a slight complication, it will
turn out that in LL-TBE we have a further choice: whether to tell a
learning expert the timing of its segment or not, which generally makes
a difference. When segment timing is preserved, we call the resulting
reference scheme \emph{sleeping LL-TBE}; when segment timing is
\emph{not} preserved we call the reference scheme \emph{freezing
LL-TBE}. The next example demonstrates that S-TBE and the two variants of
LL-TBE are really different reference schemes.

\paragraph{Example: Drifting Mean}

In applications one would usually build up complicated prediction
strategies from simpler ones in a hierarchical fashion. For example, let
us first define simple static experts, parametrised by $\mu \in \reals$,
which predict according to a standard normal distribution with mean
$\mu$ in each round. Now define a learning expert $\dm[\theta]$ that has
a stochastic model for the (unobservable) drift of $\mu$ over time. This
\emph{drifting mean} learning expert predicts according to a hidden
Markov model in which the hidden state at time $t$ is $\mu_t$ and the
production probability of an outcome given $\mu_t$ is determined by the
simple expert with parameter $\mu_t$. Initially, $\mu_1 = 0$ with
probability one. Then $\mu_{t+1} = \mu_t+1$ with probability $\theta$
and $\mu_{t+1} = \mu_t$ with probability $1-\theta$ for some fixed
parameter $\theta$. (See \figref{fig:dontlookback}.)

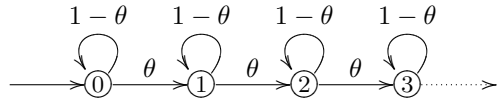
\begin{figure}
\centering
$\xymatrix@R=0.5em@C=2.8em{
\ar[r] &
\ex{0} \ar[r]^*{\theta} \ar@(ur,ul)_*{1-\theta} & 
\ex{1} \ar[r]^*{\theta} \ar@(ur,ul)_*{1-\theta} & 
\ex{2} \ar[r]^*{\theta} \ar@(ur,ul)_*{1-\theta} & 
\ex{3} \ar@{.>}[r] \ar@(ur,ul)_*{1-\theta} & 
}$
\caption{State Transitions for Learning Expert $\dm[\theta]$, which learns a drifting mean}\label{fig:dontlookback}
\end{figure}

The expert $\dm[\theta]$ may be said to be learning, because its
posterior distribution of $\mu_t$ given outcomes $x_1,\ldots,x_{t-1}$
indicates how much credibility the expert assigns to each value of
$\mu_t$: high weight on, say, $\mu_t = 3$ indicates that $\dm[\theta]$
considers it likely for $\mu_t = 3$ to give the best prediction for
$x_t$.

\begin{figure}
\centering
\subfloat[Suitable Sleeping LL-TBE Data\label{fig:sub.sl.data}]{\includegraphics[width=.24\textwidth]{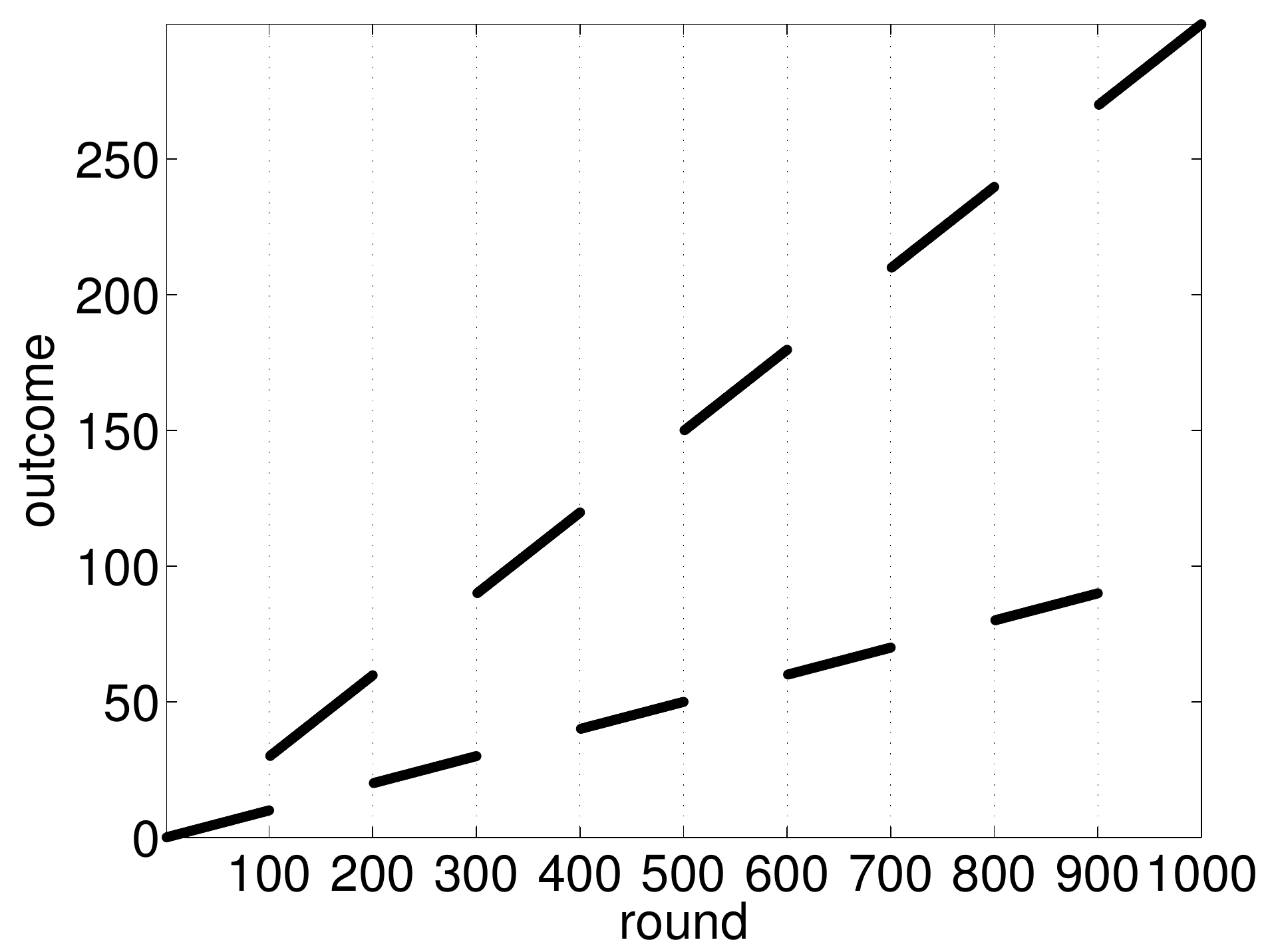}}~
\subfloat[Suitable Freezing LL-TBE Data\label{fig:sub.fr.data}]{\includegraphics[width=.24\textwidth]{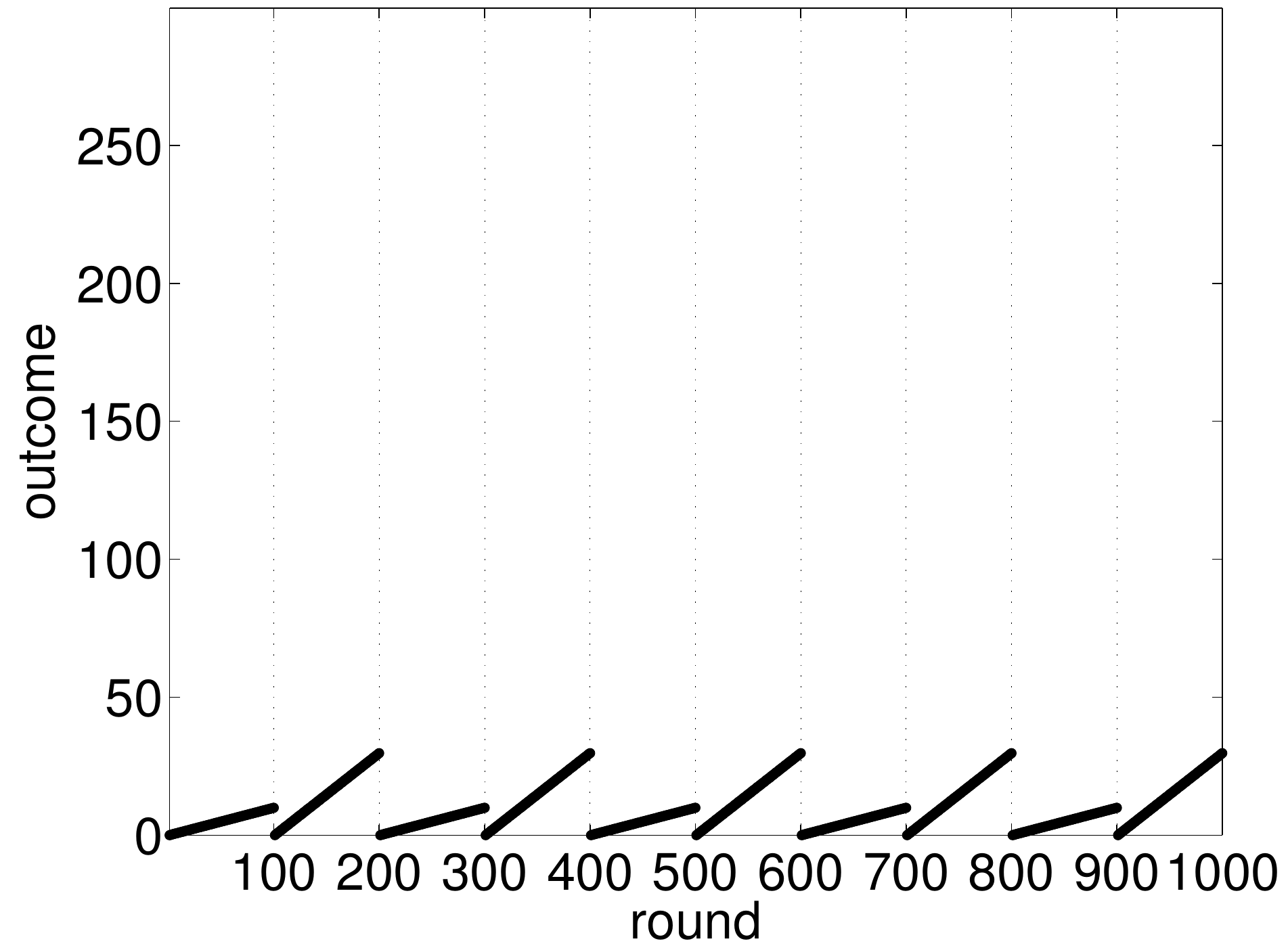}}
\\
\subfloat[Cumul.\ Loss on Data \subref{fig:sub.sl.data}\label{fig:sub.sl.loss}]{\includegraphics[width=.24\textwidth]{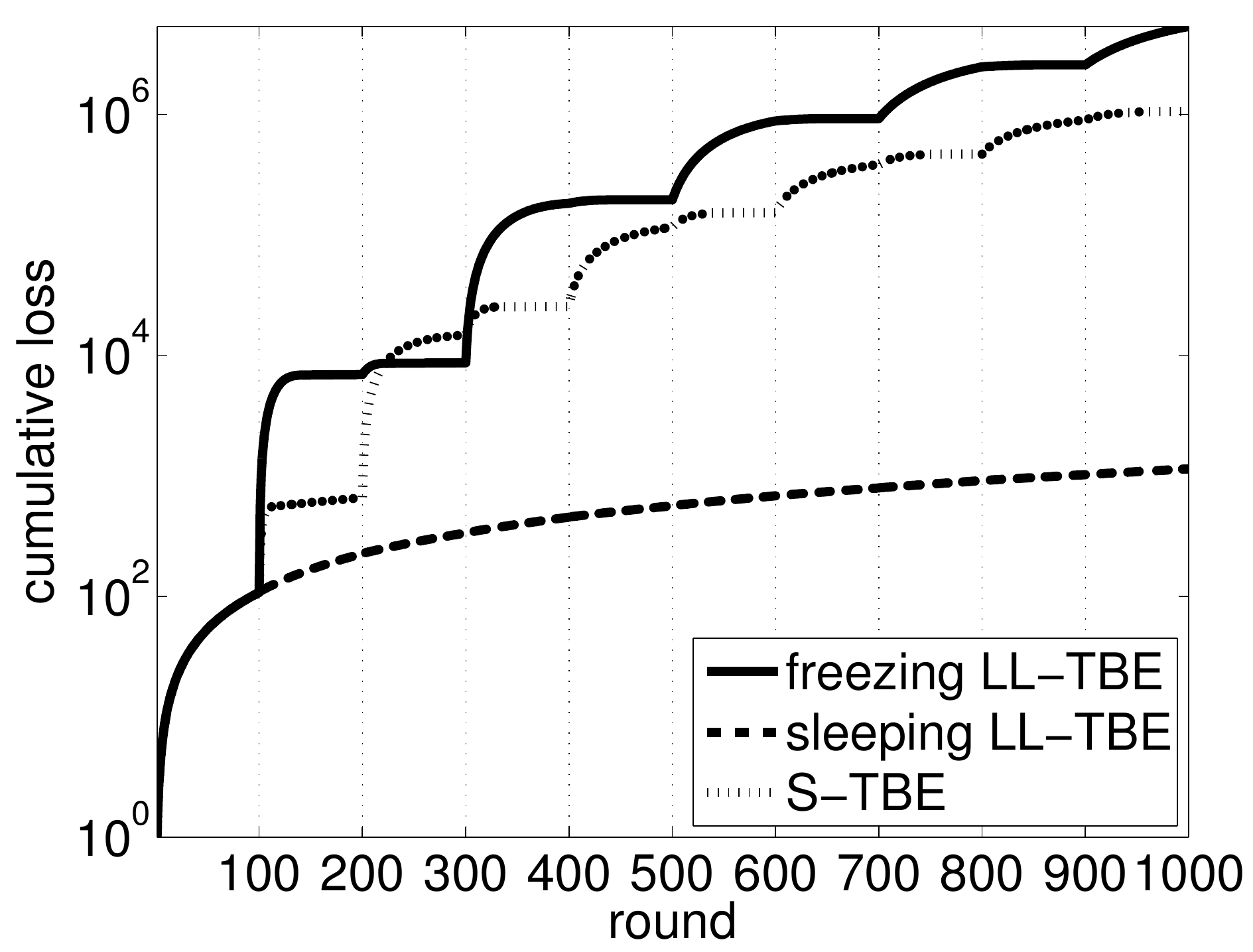}}~
\subfloat[Cumul.\ Loss on Data \subref{fig:sub.fr.data}\label{fig:sub.fr.loss}]{\includegraphics[width=.24\textwidth]{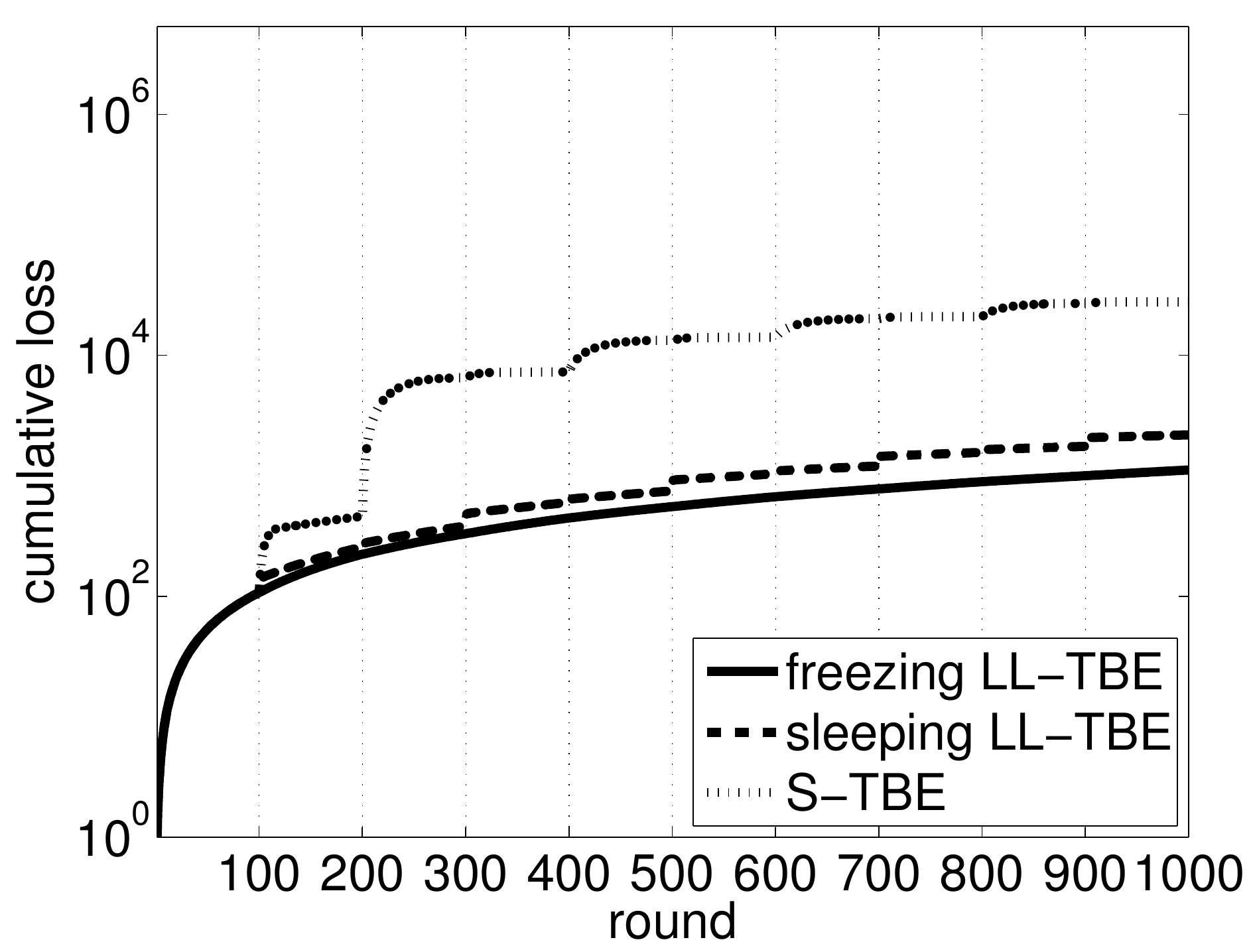}}
\caption{The Difference Between S-TBE and the Two LL-TBE Reference
Schemes. Note the logarithmic scale of the y-axis in
\subref{fig:sub.sl.loss} and \subref{fig:sub.fr.loss}!}
\end{figure}

Figures~\ref{fig:sub.sl.data} and \ref{fig:sub.fr.data} plot two
artificial data sets. For \figref{fig:sub.sl.data} sleeping LL-TBE is
appropriate, for \figref{fig:sub.fr.data} freezing LL-TBE is more
suitable. The data consist of $10$ segments of $100$ outcomes. In each
segment the outcomes are increasing deterministically at a rate of either
$0.1$ or $0.3$ per outcome. Note that for the freezing data all segments
start from $0$, whereas for sleeping any segment looks like the proces
that generated it started at $0$ at time $1$, but went unobserved for a
while.

Figures~\ref{fig:sub.sl.loss} and \ref{fig:sub.fr.loss} show the
cumulative log(arithmic) loss for all three TBE reference schemes. Note
that the difference between the schemes is so large that their losses
had to be plotted on a logarithmic scale. In each case we consider two
experts: $\dm[0.1]$ and $\dm[0.3]$ and use the expert $\dm[\theta]$ for
any segment with rate $\theta$. The difference between the three schemes
lies in which data is used by $\dm[\theta]$ to learn from. In the S-TBE
scheme $\dm[\theta]$ is shown all the data, even those outside the
segment it has to predict. In the two LL-TBE schemes, on the other hand,
a fresh copy of $\dm[\theta]$ only sees the data in the segment for
which it is selected: for freezing LL-TBE, $\dm[\theta]$ predicts as if
the current segment is the only data; for sleeping LL-TBE, $\dm[\theta]$
knows the timing of the segment it is predicting, and treats all samples
preceding that segment as unobserved. Thus in sleeping LL-TBE the original
timing of the segments is preserved, while in freezing LL-TBE it is
lost.

We see that for the sleeping data the sleeping LL-TBE reference scheme
has much smaller loss than the other two schemes. And for the freezing
data the freezing LL-TBE scheme has the smallest loss by far. (Mind the
logarithmic scale of the y-axis, which puts the loss of sleeping LL-TBE
deceptively close to the loss of freezing LL-TBE in
\figref{fig:sub.fr.loss}: a constant offset indicates a fixed multiplicative overhead.) In both
cases the reason for the large differences between the reference schemes
is that $\dm[\theta]$ gets confused if it learns from the wrong data.

\subsection{Expert Hidden Markov Models}

The learning expert $\dm[\theta]$ in the example above is a hidden Markov
model in which the production probabilities (of outcomes given the state) depend on lower-level base
experts. In general such prediction strategies are called \emph{expert
hidden Markov models} (EHMMs). The use of EHMMs is not
restricted to describing learning experts. For example, many algorithms
for prediction with expert advice, including $\fs$
itself, can be represented as EHMMs (see Koolen and De Rooij
\citeyearpar{koolen08:_combin_exper_advic_effic} and its references, and
\citet{Jaakkola2003}). In addition any ordinary HMM is trivially an EHMM:
just introduce lower-level base experts for its production
probabilities. Not every algorithm can be represented as an EHMM,
however. The follow-the-perturbed-leader algorithm by \citet{hannan57} and
\emph{variable share} by \citet{HerbsterWarmuth1998}, for instance, are
exceptions.

\subsection{Fixed Share for Learning
Experts}\label{sec:fixedsharelearningexperts}

\paragraph{LL-TBE Requires More Information}

The example above shows that there is a large difference between S-TBE and
the sleeping or freezing LL-TBE reference schemes. One may therefore
wonder whether there exists an algorithm that achieves small regret
compared to LL-TBE. Unfortunately, no algorithm will be able to do the
job without additional knowledge about the learning experts. To see
this, note that the reference scheme may split the data into segments in
any way it sees fit. But black-box experts are not telling us what their
predictions would be for any possible segmentation; they only give us a
single prediction each round. Therefore, even if we knew the
segmentation and the selected expert for each segment, we still would
have insufficient information to achieve the reference scheme. The only
way to address this problem is to get more information about the
learning experts. This information should have an efficient
representation and should somehow tell us what the learning experts
would predict for any possible segmentation.

\paragraph{Copying Experts is Less Efficient}

The straight-forward approach would be to introduce a fresh copy of each
expert for each possible start of a new segment and run the original
fixed-share algorithm on the resulting enriched set of experts. But then
the number of experts would grow linearly with the number of rounds, and
consequently the total running time would go up from linear to quadratic
in the number of outcomes. As this makes the difference between an
online algorithm that can run forever and an algorithm that effectively
comes to a stop after, say, $10^5$ outcomes, it is worth seeing whether
such an increase in running time is really unavoidable.

\paragraph{EHMMs: the Efficient Special Case}

As we will show, it turns out there is a special class of learning
experts for which no increase in running time is necessary. These are
the learning experts that can be described in EHMM form. Although this
excludes learning experts that for example implement
follow-the-perturbed-leader, the class of EHMMs is still rich enough to
be of interest, if only because it includes all ordinary HMMs.
In the interpretation of the two LL-TBE reference schemes for learning
experts in EHMM form, we do need to be careful if the base experts in
the EHMMs are learning themselves: because we make no assumptions about
the base experts, they always learn from all the data.

\paragraph{Main Result: Achieving LL-TBE Efficiently}

We present two new algorithms: $\fs^\sl$ for sleeping LL-TBE and
$\fs^\fr$ for freezing LL-TBE, which both generalise $\fs$. We show that
these algorithms achieve the same regret bound compared to their
respective LL-TBE reference schemes as $\fs$ achieves compared to the
S-TBE reference scheme. In addition, $\fs^\sl$ runs equally fast as the
original fixed-share algorithm; for
$\fs^\fr$ no slowdown occurs either if the EHMMs for the learning
experts have a finite number of hidden states, otherwise it is typically
still faster than just copying the experts.

Like fixed share, our new algorithms can be represented as EHMMs. In
fact, we will build up both algorithms by describing how to combine the
EHMMs for the learning experts, which the algorithms get as inputs, into
a single larger EHMM. Apart from introducing the LL-TBE reference
scheme, this construction is our main result: regret bounds follow from
the EHMM representations using methods described in \citetext{Koolen and
De Rooij,\, \citeyear{koolen08:_combin_exper_advic_effic}}, and the
algorithms are simply instances of the forward algorithm for EHMMs.

\subsection{Overview}

We start by formally introducing prediction with expert advice in the
next section. Then \secref{sec:ehmm} reviews EHMMs, including the
representation of $\fs$ as an EHMM. It is
shown how the standard regret bound for $\fs$
by~\citet{HerbsterWarmuth1998} can be proved using this representation.
In \secref{sec:TBLE} we formally define the freezing and sleeping
LL-TBE reference schemes and present our new algorithms. Then we prove
their regret bounds and state their running times.

\section{Preliminaries: Prediction With Expert Advice}\label{sec:preliminaries} \label{sec:expertadvice}

In this section we formally introduce the online learning setting of
prediction with expert advice. In this setting prediction proceeds in
rounds. In each round $t$, we first receive advice from each expert $\xi
\in \Xi$ in the form of an action $\as[t]^{\xi} \in \actions$. Then we
distill our own action $\as[t] \in \actions$ from the expert advice.
Finally, the actual outcome $\xs[t] \in \X$ is observed, and everybody
suffers loss as specified by a fixed loss function $\loss \colon
\actions \times \X \to \intcc{0,\infty}$. Thus, the performance of a
sequence of actions $\as = \as[1], \ldots, \as[T]$ on data $\xs =
\xs[1], \ldots, \xs[T]$ is measured by the cumulative loss
$\loss(\as,\xs) = \sum_{t=1}^T \loss(\as[t],\xs[t])$.

We present our results for \emph{log(arithmic) loss} only, which allows
us to draw on familiar concepts from probability theory, like e.g.\
conditional probabilities and hidden Markov models. Their generalisation
to arbitrary \emph{mixable losses} is straight-forward using the methods
of \citet{Vovk1999}.

\paragraph{Log Loss}
\newcommand{\ExPredDist}[1]{\operatorname{P}_{#1}}
For log loss the actions $\actions$ are probability mass (or density)
functions on $\X$ and $\loss(\Pred,x) = -\log \Pred(x)$ for any $\Pred
\in \actions$, where $\log$ denotes the natural logarithm. Notice that
minimising log loss is equivalent to maximising the predicted
probability of outcome $x$. We write $\ExPred{\xi}_t$ for the prediction
of expert $\xi$ at time $t$ and denote the predictions for all experts
jointly by $\ExPreds_t$. Another important property of the log loss is
the \emph{chain rule}: interpreting any prediction $\Pred_t(\xs[t])$ as
the conditional probability $P(\xs[t] | \xs[<t])$ of outcome $\xs[t]$
given all past outcomes $\xs[<t] = x_1, \ldots, x_{t-1}$, we see that
the cumulative log loss of a sequence of predictions
\begin{equation}\label{eqn:introchainrule}
  \sum_{t=1}^T - \log \Pred_t(\xs[t])
  ~=~
  - \log \prod_{t=1}^T P(\xs[t]|\xs[<t])
  ~=~
  - \log P(\xs)
\end{equation}
equals the negative logarithm of the joint $P$-probability of all data
$\xs$. Thus any lower bound on $P(\xs)$ directly implies an upper bound
on the cumulative loss of predictions $\Pred_1,\ldots,\Pred_T$ on data $\xs$.

\paragraph{Segments} For $m\le n$, we abbreviate the \markdef{segment} $\set{m, \ldots,
n}$ to $m:n$. 
For any sequence $y_1, y_2, \ldots$ and any segment
$\Cell = m:n$ we write $y_\Cell$ for the subsequence $y_m, \ldots, y_n$.
For example, $\xs[m:n] = x_m,\ldots,x_n$ and
$\ExPreds_{1:T} = \ExPreds_1, \ldots, \ExPreds_T$. 
If all segments in a family $\Part = \{\Cell_1, \Cell_2, \ldots\}$ are
pairwise disjoint and together cover $1:T$, then we call $\Part$ a
\markdef{segmentation} of $1:T$. 
We denote by $\family{\e_\Cell \in \E}{\Cell \in \Part}$ the labelling that assigns expert $\e_\Cell$ to segment $\Cell$.

\section{Expert Hidden Markov Models}\label{sec:ehmm}

EHMMs were introduced by Koolen and De Rooij \citeyearpar{koolen08:_combin_exper_advic_effic} as a graphical and computational language to specify strategies for prediction with expert advice. EHMM diagrams directly represent the internal structure of the prediction strategy, facilitating the derivation of loss bounds. Moreover, there is a standard algorithm for sequential prediction, the \emph{forward algorithm}, which greatly simplifies derivation of running time bounds.

In this paper, we use EHMMs in two ways. On the input side, we use them to represent the learning experts whose predictions we want to combine. On the output side, we specify our own prediction strategies based on expert advice as EHMMs. 

An \markdef{EHMM} $\A$ is a probability distribution that is
constructed according to the Bayesian network in \figref{fig:ehmm}. 
\begin{figure}
\centering
\footnotesize
$\xymatrix@!0@R=3.0em@C=3.5em{
\lt{\ar[r]}{\pinit} &
\rv{\Qs[1]} \lt{\ar[r]}{\ptf} \lt{\ar[d]}{\ppf} & 
\rv{\Qs[2]} \lt{\ar[r]}{\ptf} \lt{\ar[d]}{\ppf} &
\rv{\Qs[3]} \lt{\ar[r]}{\ptf} \lt{\ar[d]}{\ppf} & 
\rv{\Qs[4]} \lt{\ar[r]}{\ptf} \lt{\ar[d]}{\ppf} & \cdots
\\
&
\rv{\Xis[1]} \lt{\ar[d]}{\ExPreds_1} & 
\rv{\Xis[2]} \lt{\ar[d]}{\ExPreds_2} & 
\rv{\Xis[3]} \lt{\ar[d]}{\ExPreds_3} &
\rv{\Xis[4]} \lt{\ar[d]}{\ExPreds_4} & \cdots
\\
&
\rv{\Xs[1]} & 
\rv{\Xs[2]} & 
\rv{\Xs[3]} &
\rv{\Xs[4]} & \cdots
}
$
\caption{Bayesian Network Specification of an EHMM\label{fig:ehmm}}
\end{figure}
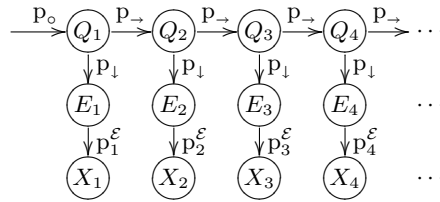
It is used to sequentially predict outcomes $\Xs[1]$, $\Xs[2]$,
$\ldots$, which take values in outcome space $\X$, using advice from a
set of experts $\E$. At each time $t$, the distribution of $\Xs[t]$
depends on a hidden state $\Qs[t]$, which determines mixing weights for
the experts' predictions. Formally, the \markdef{production function}
$\ppf$ determines the interpretation of a state: it maps any state
$\qs[t] \in \Q$ to a distribution $\ppf[{\qs[t]}]$ on the identity
$\Expert_t$ of the expert that should be used to predict $\Xs[t]$. Then
given $\Expert_t = \expert$, the distribution of $\Xs[t]$ is expert
$\expert$'s prediction $\ExPred{\expert}_t$. It remains to define the
distribution of the hidden states. The starting state $\Qs[1]$ has
\markdef{initial distribution} $\pinit$, and the state evolves according
to the \markdef{transition function} $\ptf$, which maps any state
$\qs[t]$ to a distribution $\ptf[{\qs[t]}]$ on its successor states. 

An EHMM $\A$ defines a prediction strategy as follows: after observing
$\xs[<t]$, predict the next outcome $\Xs[t]$ using the marginal
$\A\delc{\Xs[t]}{\xs[<t]}$, which is a \emph{mixture} of the experts'
predictions $\ExPreds_{t}$.

We present four example EHMMs. The first three examples are suitable as
input learning experts, which might be combined in the sleeping or
freezing LL-TBE reference scheme. The fourth example represents
$\fs$ as an EHMM, which will later be helpful
when we compare it to our new generalisations. 

\begin{example}[\figref{fig:dontlookback}: Expert that Learns a Drifting Mean]
Here we formally define the EHMM $\dm[\theta]$ from the example in the
introduction. Recall that the base experts predict according to standard
normal distributions with fixed mean $\mu$, which only takes integer
values. Thus
\begin{equation*}
  \ExPred{\mu}_t(x) \df \frac{1}{\sqrt{2\pi}}\,e^{-(x-\mu)^2/2}
\end{equation*}
for all $\mu \in \E \df \N = \{0,1,2,\ldots\}$. In this EHMM it is
sufficient to have a one-to-one correspondence between hidden states and
experts, such that $Q_t = E_t$. This is expressed by $\Q \df \E$ and
$\ppf \df \identity$, where $\identity$ denotes the identity operator.
The definition of $\dm[\theta]$ is completed by letting the initial
distribution $\pinit$ be a point-mass on $\mu=0$, and defining the
transition function $\ptf$ as in \figref{fig:dontlookback}: for any two
states $\mu,\mu' \in \Q$
\begin{equation*}
  \ptf[\mu](\mu') \df
    \begin{cases}
      \theta & \text{if $\mu' = \mu+1$,}\\
      1-\theta & \text{if $\mu' = \mu$,}\\
      0 & \text{otherwise.}
    \end{cases} \qedhere
\end{equation*}
\end{example}

\begin{example}[Bayes on base experts]\label{ex:bayes.on.experts}
  Consider the Bayesian mixture (also known as the exponentially
  weighted average predictor) of base experts $\E$ with prior $w$. We
  identify this prediction strategy with the following EHMM $\bayes[w]$,
  which makes the same predictions. As in the previous example, let $\Q
  \df \E$ and $\ppf \df \identity$, so that $Q_t = E_t$. This time, however,
  let $\pinit \df w$ and $\ptf \df \identity$. Despite its deceptive
  simplicity, this EHMM \emph{learns}: its marginal distribution of
  $X_{t+1}$ given previous outcomes $x_{1:t}$ is a mixture of the
  base expert's predictions according to the Bayesian posterior. 
\end{example}

\begin{example}[Bayes on EHMMs]\label{exa:bayes.on.ehmms}
  Let $\H = \set{\h^1, \ldots, \h^n}$ be EHMMs with base experts
  $\E^1,\ldots,\E^n$, and let $w$ be a prior on $\H$. Then, instead of
  treating $\h^1,\ldots,\h^n$ as black box predictors as in the previous
  example, their Bayesian mixture can also be expressed as a single EHMM
  $\bayes[w, \H]$ on the union of their base experts $\E \df
  \bigcup_{i=1}^n \E^i$: assume without loss of generality that
  $\h^1,\ldots,\h^n$ have disjoint state spaces $\Q^1,\ldots,\Q^n$ and
  let $\Q \df \bigcup_{i=1}^n \Q^i$. For any state $\q \in \Q^i$, let
  $\ppf[\q]$ equal $\ppf[\q,i]$, where $\ppf[i]$ is the production
  function of $\h^i$, so that all states keep their original
  interpretation. In addition let $\pinit(\q) \df w(i)\pinit^i(\q)$, where
  $\pinit^i$ denotes the initial distribution of $\h^i$. Finally, let
  $\ptf[\q](\q')$ equal $\ptf[\q,i](\q')$, the transition probability from $\q$ to $\q'$
  for $\h^i$ if $\q,\q' \in \Q^i$ and let $\ptf[\q](\q') \df 0$ otherwise.
  Again, this EHMM \emph{learns} which of the EHMMs in $\H$ is the best
  predictor.
\end{example}

\begin{example}[Fixed share]\label{exa:fixed.share}
The fixed-share algorithm take a parameter $\alpha$, called the
\emph{switching rate}. Fixed share with prior distribution $w$
on experts $\E$ and switching rate $\alpha$ can be represented as an
EHMM $\fs[\alpha, w]$ as follows. As in the Bayesian
mixture on base experts, let $\Q \df \E$ and $\ppf \df \identity$, so that
$Q_t = E_t$, and let $\pinit \df w$. Instead of the identity operator,
however, use the transition function
\begin{equation*}
  \ptf \df (1-\alpha) \identity + \alpha w \onesvec^\transposed,
\end{equation*}
where $\onesvec^\transposed$ denotes the operator that sums the
probability masses of all the hidden states. This transition
function may be interpreted as follows: behave like the Bayesian mixture
with probability $1-\alpha$, but with probability $\alpha$ take all the
probability mass and redistribute it according to the prior $w$.
Observe that for any probability distribution $\PredPost$ on states $\Q$, we can compute $\ptf \PredPost = (1-\alpha)\lambda + \alpha w$ in constant time per state.
We also note that in~\citep{HerbsterWarmuth1998} the prior $w$ is always
taken to be the uniform distribution, which gives the best worst-case
regret bound. 
\end{example}

\exclude{%
\subsection{Running Time}\label{sec:rt.for.ehmms}
Sequential predictions for EHMMs can be computed efficiently using the
\emph{forward algorithm}, which maintains the posterior distribution on
states, and predicts each outcome with a mixture of the experts'
predictions~\citet{koolen08:_combin_exper_advic_effic}. 

The forward algorithm sequentially computes $\A(\Qs[t] | \xs[<t])$, the posterior distribution on states given past data, as follows. For $t=1$, it is given by $\pinit$, the initial distribution of $\A$. For any $t$, it is updated as follows. First, compute $\A(\Xs[t] | \qs[t])$ using the prediction function $\ppf$ and expert predictions $\ExPreds_t$. Second, predict $\xs[t]$ using $\A(\Xs[t] | \xs[<t])$. Third, compute the posterior $\A(\Qs[t] | \xs[\le t])$. Finally, compute $\A(\Qs[t+1] | \xs[\le t])$ using the transition function $\ptf$. The third step is often called \emph{loss update}, the fourth \emph{state evolution}.

In general, the time cost of a single trial depends on the number of active states. We say that $\q$ is active at time $t$ if $\A(\Qs[t] = \q) > 0$. Note that a single state may be active at several times. We denote by $\Q_t$ the set of states active at time $t$. Observe that $\Q_t$ depends on neither data nor expert predictions.

Now the cost of loss update and state evolution at time $t$ is determined by the size of $\Q_t$ and the form of $\ppf$ and $\ptf$ We list the running time per trial and in total for our example EHMMs in \tabref{tab:running.times}. Note that state evolution for $\fs[\alpha, w]$ only requires $O(\card{\E})$ time, as it can be computed by scaling each state's weight by $1-\alpha$, and adding $\alpha$ times the prior weight $w$.

\exclude{
To state the running time of the forward algorithm for arbitrary EHMMs, we need the following notation.

\begin{definition}
For any probability distribution $\pmass$ on $\Q$, let $\support(\pmass)
= \setc{\q \in \Q}{\pmass(\q) > 0}$ denote its support. We recursively
define $\Q_t$, the set of states reachable in exactly $t$ steps, and
$\Q_{\le t}$, the set of states reachable in at most $t$ steps, by
\begin{align*}
  \Q_1 &\df \support(\pinit),
  &
  \Q_{t+1} & \df \bigcup_{\q \in \Q_{t}} \!\support(\ptf[\q]),
  &
  \Q_{\le t} & \df \bigcup_{i \in 1:t} \!\Q_i.
\end{align*}
Let $\nproductions(S) \df \sum_{q \in S} \card{\support(\ppf[q])}$ and $\ntransitions(S) \df \sum_{q \in S} \card{\support(\ptf[q])}$  be the number of outgoing productions and transitions from any set of states $S \subseteq \Q$. 
\end{definition}

\noindent
Sequential prediction of $\xs$ can be done in time $O\big(\sum_{t=1}^T \del[\big]{\nproductions(\Q_t) + \ntransitions(\Q_t)}\big)$, which is $O\big(T \card{\Q}(\card{\Q} + \card{\E})\big)$. On can often do better by exploiting the EHMM structure. The running times of our examples above are listed in \tabref{tab:running.times}. Note that $\fs[\alpha, w]$ does not require time quadratic in $\card{\E}$ because of the regularity of its transition function.

\begin{table}
\caption{Analysis of the Example EHMMs}\label{tab:running.times}
\centering
\begin{tabular}{l@{~~~~}l@{~~}l@{~~}l@{~~}l}
EHMM & $\card{\Q_t}$ & $\nproductions(\Q_t)$ & $\ntransitions(\Q_t)$ & Running time
\\
\midrule
$\dm[\theta]$ & $t$ & $t$ & $2t$ & $O(T^2)$
\\
$\bayes[w]$   & $\card{\E}$ & $\card{\E}$ & $\card{\E}$ & $O(\card{\E} T)$
\\
$\bayes[w,\H]$ & $\sum_{\h \in \H} \card{\Q^{\h}_t}$ & $\sum_{\h \in \H} \nproductions(\Q^{\h}_t)$ & $\sum_{\h \in \H} \ntransitions(\Q^{\h}_t)$ & Sum of running times of $\H$
\\[.2em]
$\fs[\alpha, w]$ & $\card{\E}$ & $\card{\E}$ & $\card{\E}^2$ & $O(\card{\E} T)$
\end{tabular}
\end{table}
}

\begin{table}
\caption{Analysis of the Running Time of our Example EHMMs}\label{tab:running.times}
\centering
\begin{tabular}{l@{~~~~}l@{~~}l@{~~}l}
EHMM & $\card{\Q_t}$ & Running time for trial $t$ & Running time for trials $1:T$
\\
\midrule
$\dm[\theta]$ & $t$ & $O(t)$ & $O(T^2)$
\\
$\bayes[w]$   & $\card{\E}$  & $O(\card{\E})$ & $O(\card{\E} T)$
\\
$\bayes[w,\H]$ & $\sum_{\h \in \H} \card{\Q^{\h}_t}$ & Sum of times of $\H$ & Sum of total times of $\H$
\\[.2em]
$\fs[\alpha, w]$ & $\card{\E}$ & $O(\card{\E})$ & $O(\card{\E} T)$
\end{tabular}
\end{table}
}

\subsection{Standard Fixed Share Loss Bound}

To demonstrate the graphical derivation of loss bounds for EHMMs we now
prove a regret bound for $\fs$ using its representation as an
EHMM. The general technique is to give lower bounds on the transition
function and the initial distribution. For simplicity the bound we show
is slightly weaker than the standard regret bound~\cite[Corollary
1]{HerbsterWarmuth1998}. (One could get the exact same bound by
taking into account the remark in footnote~3 of~\citetext{Koolen and De
Rooij,\, \citeyear{koolen08:_combin_exper_advic_effic}}, but this
unnecessarily complicates the proof.)

\begin{theorem}\label{thm:fixed.share.loss.bound}
  Fix a prior $w$ on experts $\E$ and a switching rate $\alpha$. Then
  for any data $\xs$, expert predictions $\ExPreds_{1:T}$, reference
  segmentation $\Part$ and assignment of experts to segments
  $\family{\xi_\Cell \in \E}{\Cell \in \Part}$
\newcommand{\braceforimpact}[2]{\underbrace{\vphantom{\sum_{\Cell \in \Part}}#1}_\textnormal{\kern -1cm #2 \kern -1cm}}
\begin{multline*}\label{eqn:fixedsharebound}
  \loss\big(\fs[\alpha,w], \xs\big) ~\le~ \\
    \braceforimpact{\sum_{\Cell \in \Part} \loss(\xi_\Cell, \xs[\Cell])}{S-TBE ref.\ scheme}
      + \braceforimpact{(T-1) \ent(\alpha^*, \alpha)}{Switching}
      + \braceforimpact{\sum_{\Cell \in \Part} -\log w(\es[\Cell])}{Expert selection},
\end{multline*}  
where $\ent(\alpha, \beta) = - \alpha \log \beta - (1-\alpha) \log
(1-\beta)$
and
$\alpha^* = \frac{\card{\Part}-1}{T-1}$.
\end{theorem}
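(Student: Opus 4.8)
The plan is to turn the loss bound into a likelihood bound and then read the likelihood directly off the $\fs[\alpha,w]$ diagram. By the chain rule \eqref{eqn:introchainrule}, the left-hand side equals $-\log \fs[\alpha,w](\xs)$, where $\fs[\alpha,w](\xs)$ is the marginal probability the EHMM assigns to the data. Since an EHMM is a Bayesian network, this marginal is a sum over all hidden state paths $q_{1:T}$, so I can lower bound it by retaining a single, well-chosen path and discarding the rest. Following the recipe announced before the theorem, I take the path $q^*_{1:T}$ dictated by the reference: set $q^*_t \df \xi_\Cell$, the expert assigned to the segment $\Cell \in \Part$ containing $t$. Keeping only this path gives
\begin{equation*}
  \fs[\alpha,w](\xs) ~\ge~ \pinit(q^*_1)\prod_{t=1}^{T-1}\ptf[{q^*_t}](q^*_{t+1})\prod_{t=1}^{T}\ExPred{q^*_t}_t(\xs[t]).
\end{equation*}

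I would then evaluate the three factors. Because $\ppf = \identity$, the state at time $t$ is exactly the expert used at time $t$, so the production factor is $\prod_{\Cell \in \Part}\prod_{t \in \Cell}\ExPred{\xi_\Cell}_t(\xs[t])$, whose negative logarithm is precisely the S-TBE term $\sum_{\Cell \in \Part}\loss(\xi_\Cell, \xs[\Cell])$. For the transitions I use the explicit kernel $\ptf[q](q') = (1-\alpha)\,\identity(q,q') + \alpha\,w(q')$ read off from the definition of $\fs[\alpha,w]$, where $\identity(q,q')$ is $1$ if $q=q'$ and $0$ otherwise. A transition internal to a segment has $q^*_t = q^*_{t+1}$ and is bounded below by $1-\alpha$; a transition across a segment boundary is bounded below by $\alpha\,w(q^*_{t+1}) = \alpha\,w(\xi_{\Cell'})$, i.e.\ $\alpha$ times the prior weight of the new segment's expert. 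Declaring every one of the $\card{\Part}-1$ boundaries a switch (even when consecutive segments happen to share an expert, where this merely loosens the bound) leaves $T-\card{\Part}$ internal transitions.

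Collecting factors, and using the initial distribution $\pinit = w$ to supply the prior weight of the first segment's expert while each boundary supplies the weight of its successor, yields
\begin{equation*}
  \fs[\alpha,w](\xs) ~\ge~ (1-\alpha)^{T-\card{\Part}}\,\alpha^{\card{\Part}-1}\Bigl(\prod_{\Cell \in \Part} w(\xi_\Cell)\Bigr)\Bigl(\prod_{\Cell \in \Part}\xi_\Cell(\xs[\Cell])\Bigr).
\end{equation*}
Taking $-\log$, the $\card{\Part}$ prior weights become the expert-selection term $\sum_{\Cell}-\log w(\xi_\Cell)$, the production product becomes the S-TBE term, and the remaining $-(\card{\Part}-1)\log\alpha - (T-\card{\Part})\log(1-\alpha)$ is recognised as $(T-1)\ent(\alpha^*,\alpha)$ after substituting $\alpha^* = (\card{\Part}-1)/(T-1)$, so that $(T-1)\alpha^* = \card{\Part}-1$ and $(T-1)(1-\alpha^*) = T-\card{\Part}$. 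This is exactly the claimed bound.

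The only genuine subtlety is the bookkeeping at the segment boundaries: to obtain a clean power of $\alpha$ together with a clean product of prior weights one must commit to treating every boundary as a switch and, correspondingly, drop the complementary summand of $\ptf$ at each transition. The rest is the routine algebraic identification of the discarded $\alpha$ and $1-\alpha$ powers with $(T-1)\ent(\alpha^*,\alpha)$ via the definition of $\alpha^*$.
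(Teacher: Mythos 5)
Your proof is correct and takes essentially the same route as the paper's: the paper likewise lower-bounds $\fs[\alpha,w](\xs)$ by discarding probability mass according to the reference segmentation, replacing the transition kernel by $(1-\alpha)\identity$ within segments and by $\alpha w \onesvec^\transposed$ at the $\card{\Part}-1$ boundaries (which degenerates the EHMM into independent copies of $\bayes[w]$, one per segment), and then keeping only the reference expert $\xi_\Cell$ in each segment's Bayesian mixture. Your single-path bound merely performs these two truncations in one step, arriving at the identical intermediate inequality $\fs[\alpha,w](\xs) \ge \alpha^{\card{\Part}-1}(1-\alpha)^{T-\card{\Part}}\prod_{\Cell \in \Part} w(\xi_\Cell)\,\ExPred{\xi_\Cell}_\Cell(\xs[\Cell])$ and the same closing algebra via the chain rule and the definition of $\alpha^*$.
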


\noindent
Note that if $w$ is the uniform distribution then $-\log w(\es[\Cell]) =
\log |\E|$ for all $\es[\Cell]$. Then the difference with the standard
bound in \citep{HerbsterWarmuth1998} is $(|\Part|-1)(\log |\E| - \log
(|\E|-1))$, which is negligible.

\begin{proof}
Recall that $\fs \equiv \fs[\alpha,w]$ has transition function $\ptf =
(1-\alpha)\identity + \alpha w \onesvec^\transposed$. Therefore for any
reference segmentation $\Part$ the joint probability $\fs(\xs)$ of any data
sequence $\xs$ can be bounded from below by replacing transitions in
$\fs$ \emph{between} segments by $\alpha w \onesvec^\transposed$, and
those \emph{within} the same segment by $(1-\alpha)\identity$. The EHMM
then degenerates into a sequence of independent Bayesian mixture EHMMs
$\bayes[w]$ (see Example~\ref{ex:bayes.on.experts}), one for each
segment. Therefore
\begin{equation*}
  \fs(\xs) \geq \alpha^{\card{\Part}-1}(1-\alpha)^{T-\card{\Part}}
                \prod_{\Cell \in \Part} \bayes[w](\xs[\Cell]).
\end{equation*}
Similarly we can lower-bound the initial distribution of $\bayes[w]$ by
a function that assigns weight $w(\es[\Cell])$ to the expert
$\es[\Cell]$ selected for $\Cell$ in the reference segmentation and is $0$
otherwise. It follows that
  $\bayes[w](\xs[\Cell])
    = \sum_{\es[]} w(\es[]) \ExPred{{\es[]}}_{\Cell}(\xs[\Cell])
    \geq w(\es[\Cell])
    \ExPred{{\es[\Cell]}}_{\Cell}(\xs[\Cell])$,
where $\ExPred{{\es[]}}_{\Cell}(\xs[\Cell])$ denotes the joint
probability of outcomes $\xs[\Cell]$ according to the predictions of
expert $\es[]$.
Hence by \eqref{eqn:introchainrule} we can conclude that
\begin{multline*}
  \loss\big(\fs, \xs\big)
    = -\log \fs(\xs)\notag\\
    \leq -\log \alpha^{\card{\Part}-1}(1-\alpha)^{T-\card{\Part}}
      + \sum_{\Cell \in \Part} -\log \ExPred{{\es[\Cell]}}_{\Cell}(\xs[\Cell])
      -\log w(\es[\Cell])\notag\\
    =(T-1) \ent(\alpha^*, \alpha)
      +\sum_{\Cell \in \Part} \loss(\xi_\Cell, \xs[\Cell])
      + \sum_{\Cell \in \Part} -\log w(\es[\Cell]),
\end{multline*}  
which completes the proof.
\end{proof}

\newcommand{\B}{\hmm{B}}
\newcommand{\C}{\hmm{C}}

\section{Fixed Share for Learning Experts}\label{sec:TBLE}

In this section we define the freezing and sleeping LL-TBE reference
schemes for learning experts. Then, for each scheme, we provide our
prediction strategy $\fs^\fr$ and $\fs^\sl$ and we prove that it
achieves as small regret as $\fs$.

\subsection{LL-TBE and the Loss of an EHMM on a Segment}

In order to state the loss of the freezing and sleeping LL-TBE reference
schemes, we first define the loss of a single learning expert on a
single segment. Then we define the loss of a whole segmentation.

\exclude{%
Consider the learning expert $\dm[\theta]$, which operates on a set of experts $\E$ indexed by the natural numbers, and which learns the index of the expert that is currently best, under the assumption that this index will increase with rate $\theta$ over time. We now consider the predictions of $\dm[\theta]$ on a segment of data $\xs$. Under the freezing interpetation, $\dm[\theta]$ predicts outcomes $\xs[1000:2000]$ (roughly) by quoting experts $0$ to $1000\,\theta$. On the other hand, under the sleeping interpretation $\dm[\theta]$ predicts the same outcomes (roughly) by quoting experts $999\,\theta$ to $1999\,\theta$. We say roughly, since in reality $\dm[\theta]$ generally predicts using a mixture of many experts.
}
Let $\A$ be the EHMM for a learning expert with arbitrary base experts
$\E$. Then the freezing and sleeping probability distributions
$\A^\fr_{i:j}$ and $\A^\sl_{i:j}$ on segment $\xs[i:j]$ are specified by
the Bayesian networks of \figref{fig:sleeping.n.freezing.EHMMs}. For
freezing, the state at time $i$ is simply initialised according to
$\A$'s initial distribution $\pinit$. For sleeping, we forward the
initial distribution to time $i$ by repeatedly applying the transition
function $\ptf$.
Thus, the cumulative freezing and sleeping losses of $\A$ on segment
$\xs[i:j]$ are given by $\loss(\h^\fr_{i:j}, \xs[i:j]) \df - \log
\A^\fr_{i:j}(\xs[i:j])$ and $\loss(\h^\sl_{i:j}, \xs[i:j]) \df - \log
\A^\sl_{i:j}(\xs[i:j])$. Note that we treat the base experts $\E$ as
black boxes, so they may learn from the whole data.

\begin{definition}[LL-TBE reference loss]
Fix data $\xs$ and a set of EHMMs $\H$. Let $\Part$ be a segmentation of $1:T$ and let $\family{\h_\Cell \in \H}{\Cell \in \Part}$ be an assignment of experts to segments. Then the losses of the freezing and sleeping LL-TBE reference schemes are
$\sum_{\Cell \in \Part} \loss(\h^\fr_\Cell, \xs[\Cell])$
 and $\sum_{\Cell \in \Part} \loss(\h^\sl_\Cell, \xs[\Cell])$.
\end{definition}

\noindent
Note that selecting a learning expert on consecutive segments differs from selecting that expert on their union, since experts are reset between segments.

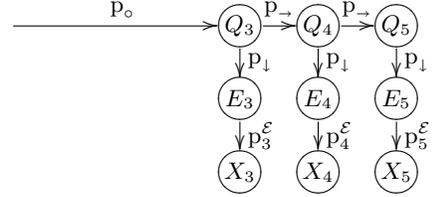
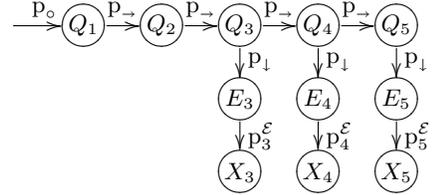
\begin{figure}
\centering
\subfloat[Freezing: EHMM $\A^\fr_{3:5}$\label{fig:freezing}]{
\footnotesize
$\xymatrix@!0@R=3.0em@C=3.2em{
\lt{\ar[rrr]}{\pinit}&
& 
& 
\rv{\Qs[3]} \lt{\ar[r]}{\ptf} \lt{\ar[d]}{\ppf} & 
\rv{\Qs[4]} \lt{\ar[r]}{\ptf} \lt{\ar[d]}{\ppf} &
\rv{\Qs[5]}                   \lt{\ar[d]}{\ppf} 
\\
&
& 
& 
\rv{\Xis[3]} \lt{\ar[d]}{\ExPreds_3} & 
\rv{\Xis[4]} \lt{\ar[d]}{\ExPreds_4} & 
\rv{\Xis[5]} \lt{\ar[d]}{\ExPreds_5} 
\\
&
& 
& 
\rv{\Xs[3]} & 
\rv{\Xs[4]} & 
\rv{\Xs[5]} 
}$}%
\\
\subfloat[Sleeping: EHMM $\A^\sl_{3:5}$\label{fig:sleeping}]{
\footnotesize
$\xymatrix@!0@R=3.0em@C=3.2em{
\lt{\ar[r]}{\pinit} &
\rv{\Qs[1]} \lt{\ar[r]}{\ptf}  & 
\rv{\Qs[2]} \lt{\ar[r]}{\ptf} & 
\rv{\Qs[3]} \lt{\ar[r]}{\ptf} \lt{\ar[d]}{\ppf} &
\rv{\Qs[4]} \lt{\ar[r]}{\ptf} \lt{\ar[d]}{\ppf} &
\rv{\Qs[5]}                   \lt{\ar[d]}{\ppf} 
\\
&
& 
& 
\rv{\Xis[3]} \lt{\ar[d]}{\ExPreds_3} & 
\rv{\Xis[4]} \lt{\ar[d]}{\ExPreds_4} & 
\rv{\Xis[5]} \lt{\ar[d]}{\ExPreds_5} 
\\
&
& 
& 
\rv{\Xs[3]} & 
\rv{\Xs[4]} &
\rv{\Xs[5]} 
}$}%
\caption{Freezing and Sleeping EHMM $\A$ on Example Segment $\xs[3:5]$}\label{fig:sleeping.n.freezing.EHMMs}
\end{figure}

\subsection{Main Result: Construction of the Freezing and Sleeping EHMMs}\label{sec:ll-tbe}

We now present the construction of EHMMs for the freezing and sleeping
algorithms $\fs^\fr$ and $\fs^\sl$. Let $\H$ be a set of learning
experts, each expert $\h \in \H$ presented as an EHMM on basic experts
$\E$. Let $w$ be a prior on $\H$, and let $\alpha$ be a switching rate.
We proceed in two steps. First construct the Bayesian EHMM $\B =
\bayes[w, \H]$ as in \exaref{exa:bayes.on.ehmms}. Recall that $\B$
learns which of the EHMMs in $\H$ predicts best. Second, construct the
freezing EHMM $\fs^\fr[\alpha, \B]$ or the sleeping
EHMM\footnote{Strictly speaking, the Bayesian network in
\figref{fig:tracking.learning.expert.sleeping} is not an EHMM, since the
transition function depends on the time. Nevertheless, this
time-dependency can be removed without any computational overhead using
a process called \emph{unfolding}, see~\citetext{Koolen and De Rooij,\,
\citeyear{us:ceae:corr}}.}
$\fs^\sl[\alpha,\B]$ as shown in
\figref{fig:tracking.learning.expert.ehmms}. Note how, on a switch, both
EHMMs reset the entire state of $\B$, which includes the states of
experts in $\H$. In contrast, $\fs$ only resets its weighting on $\H$,
but does not touch the internal state of the experts in $\H$.

\begin{figure}
\centering
\setlength{\abovedisplayskip}{.25em}
\setlength{\belowdisplayskip}{1em}
\subfloat[{EHMM $\fs^\fr[\alpha, \B]$}\label{fig:tracking.learning.expert.freezing}]{
\begin{minipage}{.46\textwidth}
\[\xymatrix@!0@C=5em@R=3em{
\lt{\ar[r]}{\pinit[\B]} & \rv{\Qs[1]} \lt{\ar[r]}{\ptf} \lt{\ar[d]}{\ppf[\B]} & \rv{\Qs[2]} \lt{\ar[r]}{\ptf} \lt{\ar[d]}{\ppf[\B]} & \rv{\Qs[3]} \lt{\ar[d]}{\ppf[\B]}
\\
& \rv{\Es[1]} \lt{\ar[d]}{\ExPreds_1} & \rv{\Es[2]} \lt{\ar[d]}{\ExPreds_2} & \rv{\Es[3]} \lt{\ar[d]}{\ExPreds_3}
\\
& \rv{\Xs[1]} & \rv{\Xs[2]} & \rv{\Xs[3]}
}\]
\[\ptf ~\df~ (1-\alpha)\ptf[\B]{} + \alpha
\pinit[\B]\onesvec^\transposed\]
{\small Any switch reverts to $\pinit[\B]$, the initial distribution of
$\B$.}
\end{minipage}
}
\\
\subfloat[{EHMM $\fs^\sl[\alpha, \B]$}\label{fig:tracking.learning.expert.sleeping}]{
\begin{minipage}{.46\textwidth}
\[\xymatrix@!0@C=5em@R=3em{
\lt{\ar[r]}{\pinit[\B]} & \rv{\Qs[1]} \lt{\ar[r]}{{\ptf}_{(1)}} \lt{\ar[d]}{\ppf[\B]} & \rv{\Qs[2]} \lt{\ar[r]}{{\ptf}_{(2)}} \lt{\ar[d]}{\ppf[\B]} & \rv{\Qs[3]} \lt{\ar[d]}{\ppf[\B]}
\\
& \rv{\Es[1]} \lt{\ar[d]}{\ExPreds_1} & \rv{\Es[2]} \lt{\ar[d]}{\ExPreds_2} & \rv{\Es[3]} \lt{\ar[d]}{\ExPreds_3}
\\
& \rv{\Xs[1]} & \rv{\Xs[2]} & \rv{\Xs[3]}
}\]
\[{\ptf}_{(t)} ~\df~ (1-\alpha)\ptf[\B]{} + \alpha \del{\ptf[\B]}^t \pinit[\B]\onesvec^\transposed\]
{\small The switch between time $t$ and $t+1$ reverts to $(\ptf[\B])^t
\pinit[\B]$, the $t^\text{th}$ evolution of the initial distribution of
$\B$.}
\end{minipage}
}
\caption{EHMMs for Tracking the EHMM $\B$ with Switching Rate $\alpha$}\label{fig:tracking.learning.expert.ehmms}
\end{figure}

\subsection{Prediction Algorithms}\label{sec:algorithms}
To sequentially predict data using our prediction strategies $\fs^\fr$ and $\fs^\sl$, one needs to run the forward algorithm on their respective EHMMs. An explicit rendering of this process is included in \algoref{alg:forward}.

\begin{algorithm}
\caption{Explicit Forward Algorithm on $\fs^\fos$ for both Freezing and
Sleeping $(\fos \in \{\fr,\sl\})$}\label{alg:forward}
\setlength{\abovedisplayskip}{.2em}
\setlength{\belowdisplayskip}{0cm}
\begin{algorithmic}[1]
\State Construct $\B = \bayes[w, \H]$ with $\Q$, $\pinit$,$\ppf$ and $\ptf$ as in \exaref{exa:bayes.on.ehmms}.
\State Initialisation: $\PredPost \gets \pinit$
\For {$t=1,\ldots$} \Comment{{\tiny Invariant: $\lambda(\q) = \fs^\fos[\alpha, \B](\Qs[t] = \q | \xs[<t])$}}
\State Receive expert advice $\ExPreds_t$.
\State Predict $\xrv_t$ using 
\[\PredPost(\xrv_t) = \sum_{e \in \E, \q \in \Q} \PredPost(\q) \ptf[\q](\e) \ExPred{\e}_t(\xrv_t).\]
\State Observe $\Xs[t] = \xs[t]$. Suffer loss $\loss\big(\lambda(\Xs[t]), \xs[t]\big)$.
\State Loss update: $\PredPost(\q) \gets \PredPost(\q, x_t)/\PredPost(\x_t)$, where 
\[\PredPost(\q, x_t) = \sum_{e \in \E} \PredPost(\q) \ptf[\q](\e) \ExPred{\e}_t(\x_t).\]
\State State evolution:
\[\PredPost \gets
\begin{cases}
(1-\alpha) \ptf \PredPost + \alpha \pinit  & \text{(Freezing)}
\\
(1-\alpha) \ptf \PredPost + \alpha (\ptf)^t \pinit & \text{(Sleeping)}
\end{cases}
\]
\EndFor
\end{algorithmic}
\end{algorithm}

At any time $t$, the algorithm for $\fs^\sl$ only maintains non-zero
weights on hidden states of the input learning experts that are
reachable in \emph{exactly} $t$ steps from the starting states, just
like the original $\fs$ algorithm. It therefore has the same running
time. The algorithm for $\fs^\fr$, however, has to keep track of all
states reachable in \emph{at most} $t$ steps. Consequently, in the worst
case (over input EHMMs) it may be as slow as restarting expert copies (see
\secref{sec:fixedsharelearningexperts}). But if the input EHMMs have a
finite number of hidden states, then its running time is of the same
order as that of $\fs$. And if the states (of the input EHMMs) that are
reachable in exactly $t$ steps are the same ones as the states reachable
in at most $t$ steps, which holds e.g.\ for the drifting-mean expert
$\dm[\theta]$ from the introduction, then we also recover the efficiency of
$\fs$.

\subsection{Loss Bound}\label{sec:lossbound}
\thmref{thm:fixed.share.loss.bound} bounds the regret of $\fs$ compared to the S-TBE reference scheme by a ``switching'' and an ``expert selection'' term. We bound the regret of $\fs^\fr$ and $\fs^\sl$ compared to their LL-TBE reference scheme by the same two terms.

\begin{theorem}
  Fix a set of EHMMs $\H$ on basic experts $\E$, a prior $w$ on $\H$,
  a switching rate $\alpha$ and $\fos \in \set{\fr,\sl}$. Let $\B =
  \bayes[w, \H]$. Then for any data $\xs$, expert predictions
  $\ExPreds_{1:T}$, reference segmentation $\Part$ and assignment of
  experts to segments $\family{\h_\Cell \in \H}{\Cell \in \Part}$
  \newcommand{\braceforimpact}[2]{\underbrace{\vphantom{\sum_{\Cell \in \Part}}#1}_\textnormal{\kern -1cm #2 \kern -1cm}}
\begin{multline*}
  \loss\big(\fs^\fos\sbr{\alpha, \B}, \xs\big) ~\le~ 
\\
    \braceforimpact{\sum_{\Cell \in \Part} \loss(\h_\Cell^\fos, \xs[\Cell])}{LL-TBE ref.\ scheme}
      + \braceforimpact{(T-1) \ent(\alpha^*, \alpha)}{Switching}
      + \braceforimpact{\sum_{\Cell \in \Part} -\log  w(\h_\Cell)}{Expert selection},
\end{multline*}
where $\ent(\alpha^*, \alpha)$ and $\alpha^* = \frac{\card{\Part}-1}{T-1}$ are as in \thmref{thm:fixed.share.loss.bound}.
\end{theorem}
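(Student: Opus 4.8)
The plan is to transcribe the graphical argument for \thmref{thm:fixed.share.loss.bound} almost verbatim, with two substitutions. Where that proof used the identity $\identity$ to ``stay within a segment'' and the Bayesian reset $w\onesvec^\transposed$ to ``start a new segment'', here the within-segment rle is played by the scaled base transition $(1-\alpha)\ptf[\B]$, and the reset by the switch term $\alpha\pinit[\B]\onesvec^\transposed$ (freezing) resp.\ $\alpha(\ptf[\B])^t\pinit[\B]\onesvec^\transposed$ (sleeping). Fix the reference segmentation $\Part$. Since the transition function of $\fs^\fos[\alpha,\B]$ is in both cases a sum of a nonnegative within-term and a nonnegative switch-term, I would lower-bound the joint probability $\fs^\fos[\alpha,\B](\xs)$ by keeping, at each of the $T-1$ transitions, only the within-term $(1-\alpha)\ptf[\B]$ at transitions interior to a segment and only the switch-term at the $\card{\Part}-1$ segment boundaries. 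This extracts a scalar $\alpha^{\card{\Part}-1}(1-\alpha)^{T-\card{\Part}}$ (there are $T-\card{\Part}$ interior transitions) and decouples the network into one independent factor per segment:
\[
  \fs^\fos[\alpha,\B](\xs) ~\geq~ \alpha^{\card{\Part}-1}(1-\alpha)^{T-\card{\Part}} \prod_{\Cell\in\Part} \B^\fos_\Cell(\xs[\Cell]).
\]

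Next I would verify that the factor for a segment $\Cell = s:e$ is exactly the freezing (resp.\ sleeping) EHMM of $\B$ from \figref{fig:sleeping.n.freezing.EHMMs}. After the boundary switch into $\Cell$ the state is reset to $\pinit[\B]$ in the freezing case, and to $(\ptf[\B])^{s-1}\pinit[\B]$ in the sleeping case (the switch occurs between times $s-1$ and $s$, so $t=s-1$); the latter is precisely the $(s{-}1)$-th forward evolution of $\pinit[\B]$ demanded by \figref{fig:sleeping}. The interior transitions then apply bare $\ptf[\B]$ (the $(1-\alpha)$ factors already pulled out) and production uses $\ppf[\B]$, so the factor is $\B^\fr_\Cell(\xs[\Cell])$ resp.\ $\B^\sl_\Cell(\xs[\Cell])$. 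I would then exploit that $\B=\bayes[w,\H]$ has block-diagonal structure: its state space is the disjoint union of the $\Q^\h$, $\ptf[\B]$ never leaves a block, and $\pinit[\B]$ gives block $\h$ total mass $w(\h)$ initialised by $\h$'s own $\pinit[\h]$. Hence both freezing and sleeping constructions act within each block separately, and the segment probability decomposes as a mixture $\B^\fos_\Cell(\xs[\Cell]) = \sum_{\h\in\H} w(\h)\,\h^\fos_\Cell(\xs[\Cell])$, mirroring the step $\bayes[w](\xs[\Cell]) = \sum_{\es[]} w(\es[])\ExPred{\es[]}_\Cell(\xs[\Cell])$ in \thmref{thm:fixed.share.loss.bound}. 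Dropping all but the assigned expert gives $\B^\fos_\Cell(\xs[\Cell]) \geq w(\h_\Cell)\,\h^\fos_\Cell(\xs[\Cell])$. Substituting, taking $-\log$, and applying the chain rule \eqref{eqn:introchainrule} turns the product into the claimed sum; the scalar contributes $(T-1)\ent(\alpha^*,\alpha)$ by the identical computation as before, using $\card{\Part}-1=\alpha^*(T-1)$ and $T-\card{\Part}=(1-\alpha^*)(T-1)$.

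The main obstacle is conceptual rather than computational: confirming that the two-part transition function genuinely degenerates into the \emph{per-segment} freezing/sleeping EHMMs of \figref{fig:sleeping.n.freezing.EHMMs}, rather than into something that merely resembles them. The delicate point is the sleeping case, where the reset is time-indexed: one must check that the switch occurring between times $s-1$ and $s$ deposits exactly the forwarded initialisation $(\ptf[\B])^{s-1}\pinit[\B]$ at the start of the segment, so that the decoupled factor coincides with $\B^\sl_\Cell$. Once this identification and the block-diagonal mixture decomposition of $\B^\fos_\Cell$ are established, the remainder is a direct transcription of the proof of \thmref{thm:fixed.share.loss.bound}.
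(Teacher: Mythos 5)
Your proposal is correct and follows essentially the same route as the paper's own proof: lower-bound the transition function to decouple $\fs^\fos[\alpha,\B]$ into the scalar $\alpha^{\card{\Part}-1}(1-\alpha)^{T-\card{\Part}}$ times a product of per-segment freezing/sleeping EHMMs $\B^\fos_\Cell$, observe that freezing/sleeping distributes over the Bayesian mixture so that $\B^\fos_\Cell(\xs[\Cell]) \ge w(\h_\Cell)\,\h^\fos_\Cell(\xs[\Cell])$, and conclude via the chain rule \eqref{eqn:introchainrule}. Your explicit verification of the sleeping timing (the switch between times $s-1$ and $s$ depositing $(\ptf[\B])^{s-1}\pinit[\B]$) and of the block-diagonal structure behind the mixture decomposition are details the paper leaves implicit, but they are the right checks and do not change the argument.
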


\begin{proof}
The proof proceeds like that of \thmref{thm:fixed.share.loss.bound}.
  Lower-bounding transitions between segments by $\alpha
  \pinit[\B]\onesvec^\transposed$ (freezing) or $\alpha
  \del{\ptf[\B]}^t \pinit[\B]\onesvec^\transposed$ (sleeping), and
  transitions within each segment by $(1-\alpha)\ptf[\B]{} $, we get
\begin{equation}
  \fs^\fos[\alpha, \B]
    \geq \alpha^{\card{\Part}-1}(1-\alpha)^{T-\card{\Part}}
                \prod_{\Cell \in \Part} \B^\fos_\Cell(\xs[\Cell]),
\end{equation}
where $\B^\fos_\Cell$ denotes the result of freezing or sleeping $\B$ on
segment $\Cell \in \Part$ as in \figref{fig:sleeping.n.freezing.EHMMs}.
Observe that freezing and sleeping distribute over taking the Bayesian
mixture: $\B^\fos_\Cell = 
\bayes[w, \H^\fos_\Cell]$, where $\H^\fos_\Cell \df \set{\h^\fos_\Cell
\mid \h \in \H}$. 
As $\bayes[w,\H^\fos_\Cell](\xs[\Cell]) = \sum_{\h} w(\h)
\h^\fos_\Cell(\xs[\Cell]) \ge w(\h_\Cell) \h^\fos_\Cell(\xs[\Cell])$,
the theorem follows from 
\eqref{eqn:introchainrule}, like in the proof of
\thmref{thm:fixed.share.loss.bound}.
\end{proof}

\exclude{%
\subsection{Running Time}\label{sec:runningtime}
We now relate the running time of the freezing and sleeping fixed-share EHMMs on a set of EHMM experts $\H$ to the running time of the standard fixed-share EHMM on the same experts $\H$. 
As always, let $\B = \bayes[w, \H]$.

To run standard fixed share on $\H$, we separately run the forward algorithm on each $\h \in \H$, and feed the resulting predictions to a separate instance of the forward algorithm running on the fixed-share EHMM. The total running time of this process (i.e.\ the sum of the running times of the forward algorithms) is dominated by that of the forward algorithms on the EHMM experts, since standard fixed share only requires constant time per expert per trial. Thus, the running time equals that of $\B$, which is a (weighted) disjoint union of $\H$.

Sleeping fixed share (\figref{fig:tracking.learning.expert.sleeping}) operates on the states of $\B$. Moreover, it has the same set of active states (see \secref{sec:rt.for.ehmms}) as $\B$, since a switch reverts to the initial distribution \emph{forwarded to the current time}, whose support equals the currently active states of $\B$. Finally, its state evolution requires only constant time per state per trial. So sleeping fixed share has the same running time as standard fixed share (up to a small constant factor).

Freezing fixed share (\figref{fig:tracking.learning.expert.freezing}) also operates on the states of $\B$, but it has a larger set of active states. Let $\Q_t$ denote the set of active states of $\B$ at time $t$. Then freezing fixed share has active states $\Q_{\le t} = \bigcup_{i \le t} \Q_i$, since each switch reverts to the initial distribution, which has support $\Q_1$. Again, state evolution requires only constant time per state per trial, so that the overhead incurred by the increased active state set is at most a factor $T$ over standard fixed shares running time.

\exclude{
The running time of sequential prediction using the freezing and sleeping EHMMs is displayed in \tabref{tab:running.times2}, analogous to \secref{sec:rt.for.ehmms}. Again, by the regularity of the fixed-share update, the state-set-size products are avoidable. To derive the running time of freezing, we use 
\begin{align*}
  \sum_{t\in 1:T} \nproductions(\Q_{\leq t})
  &~\leq~ T \nproductions(\Q_{\leq T})
  ~\leq~ T \sum_{t\in 1:T} \nproductions(\Q_t),\quad \text{and}
\\
  \sum_{t\in 1:T} \ntransitions(\Q_{\leq t})
  &~\leq~ T \ntransitions(\Q_{\leq T})
  ~\leq~ T \sum_{t\in 1:T} \ntransitions(\Q_t).
\end{align*}
Let $\B = \bayes[w,\H]$. The running time of $\fs$ (standard fixed share) on the set of EHMMs $\H$, when we account for the time required to compute the predictions on the experts in $\H$, equals the running time of $\B$. Thus, sleeping fixed share is just as efficient as standard fixed share, while freezing fixed share is at most a factor $T$ slower.

\begin{table}
\caption{Analysis of the Freezing \& Sleeping EHMMs}\label{tab:running.times2}
\centering
\begin{tabular}{l@{~~~~}l@{~~}l@{~~}l@{~~}l}
EHMM & $\card{\Q_t}$ & $\nproductions(\Q_t)$ & $\ntransitions(\Q_t)$ & Running time
\\
\midrule
$\fs^\sl[\alpha, \B]$ & $\card{\Q^{\B}_t}$ & $\nproductions\del{\Q^{\B}_t}$ & $\ntransitions\del{\Q^{\B}_t} + \card{\Q^\B_t} \card{\Q^\B_{t+1}}$ & that of $\B$
\\[.2em]
$\fs^\fr[\alpha, \B]$ & $\card{\Q^{\B}_{\le t}}$ & $\nproductions\del{\Q^{\B}_{\le t}}$ & $\ntransitions\del{\Q^{\B}_{\le t}} + \card{\Q^\B_{\le t}} \card{\Q^\B_{1}}$ & at most $T$ times that of $\B$
\end{tabular}
\end{table}
}
}

\section{Conclusion}\label{sec:discussion}
We revisited the tracking the best expert reference scheme (TBE), which
asks for a strategy for prediction with expert advice that suffers small
additional loss compared to the best expert per segment. This goal is
natural when the characteristics of the data, and hence the best expert,
are different between segments.

For learning experts, the standard interpretation of experts as black
boxes implies training the experts on all data. We proposed a variation,
adapted to learning experts, in which experts are only trained on the
segment on which they are evaluated. Our scheme is able to exploit
patterns in the data \emph{per segment}, leading to smaller loss.

Although in general extending the standard fixed-share algorithm to our
setting will slow it down by a factor of $T$ on $T$ outcomes, we showed
that no such slowdown is necessary if the learning experts can be
represented as expert hidden Markov models (EHMMs).
We proved the loss bounds one would expect based on the
loss bound for the original fixed-share algorithm.

\subsection{Discussion and Future Work}
\paragraph{Learning the Switching Rate}
Like fixed share, our algorithms depend on a switching rate parameter
$\alpha$, which has to be fixed. Instead, one may want to tune $\alpha$
automatically based on the data. For $\fs$ this can be done efficiently
(see~\citetext{De Rooij and Van Erven,\, \citeyear{DeRooijVanErven2009}}
and references therein). The same methods transfer directly to $\fs^\fr$
and $\fs^\sl$.

\paragraph{S-TBE vs LL-TBE}

We have discussed experts that learn only on their assigned segment.
Perhaps surprisingly, this does \emph{not always} increase performance.
For example, we may have homogeneous data and experts that learn its
global pattern at different rates. In such cases we clearly want to
train each expert on all observations and, by switching at the right
times, select the expert that has learned most until then. This
scenario is analysed by~\Citet{threemusketeers07}, where experts are
parameter estimators for a series of statistical models of increasing
complexity.

\paragraph{Partitions instead of Segmentations}

Rather than split the data into segments as in the TBE reference scheme,
one may wish to partition it arbitrarily into cells such that
observations in the same cell need not be consecutive. Like fixed share,
the corresponding algorithm \citep{bousquet2002} can be generalised to
the LL-TBE setting without increasing its running time. In this case
naively introducing copies of the experts for all possible partitions is
infeasible: it would slow down the algorithm by an exponential factor
$2^T$ on $T$ outcomes.

\normalsize

{\small\baselineskip=9pt\bibliography{writeup,experts}}

\normalsize

\end{document}